\documentclass[10pt,twocolumn,letterpaper]{article}

\usepackage[pagenumbers]{wacv}           

\usepackage{amsthm}            
\usepackage{multirow}          
\usepackage{algorithm}         
\usepackage{algpseudocode}     

\definecolor{wacvblue}{rgb}{0.21,0.49,0.74} \usepackage[pagebackref,breaklinks,colorlinks,allcolors=wacvblue]{hyperref}

\AtEndPreamble{%
  \crefname{equation}{Eq.}{Eqs.}%
  \Crefname{equation}{Equation}{Equations}%
}

\newtheorem{theorem}{Theorem} 

\def\wacvPaperID{1787}
\def\confName{WACV}
\def\confYear{2027}

\begin{document}

\title{CANAL: Channel-Aware Noise Allocation for Differentially Private \\
Feature Distillation in Medical Image Segmentation}

\author{Armaghan Butt \quad Shuya Feng \quad Qing Tian\\
Department of Computer Science, University of Alabama at Birmingham\\
Birmingham, AL, USA\\
{\tt\small \{ab36, fengs, qtian\}@uab.edu} }
\maketitle

\begin{abstract}
Medical image segmentation needs diverse training data, but hospitals hold complementary scans they cannot share for privacy and regulatory reasons. Knowledge distillation can bridge this gap by exporting learned feature representations instead of images, but those representations still encode patient-specific anatomy and remain vulnerable to membership-inference and feature-inversion attacks. Adding calibrated Gaussian noise restores a differential-privacy guarantee, yet three issues have been overlooked. First, prior DP feature-distillation pipelines re-sample noise at every student iteration, so each patient image is released many times and the privacy cost composes over those releases, growing by orders of magnitude. We present a sample-once-per-image release, realized by a single precomputation pass, under which each patient contributes one release.
Second, uniform noise is wasteful because channels differ in task importance. Using task-gradient energy as the importance measure, we derive CANAL, a closed-form water-filling allocation that gives important channels proportionally less noise, and prove it strictly minimizes importance-weighted distortion at a fixed budget.
Third, the clipping caps and importance scores that drive the allocation are themselves data-dependent, so releasing them in the clear silently breaks the guarantee. We give a DP-honest budget split that charges each to the privacy budget, so the reported $\varepsilon$ is the true $\varepsilon$. Across three medical segmentation benchmarks spanning dermoscopy, colonoscopy, and ultrasound, CANAL retains more task-relevant signal than uniform noise at the same privacy budget.
\end{abstract}

\section{Introduction}
\label{sec:intro}

\begin{figure*}[t]
  \centering
  \includegraphics[width=\textwidth]{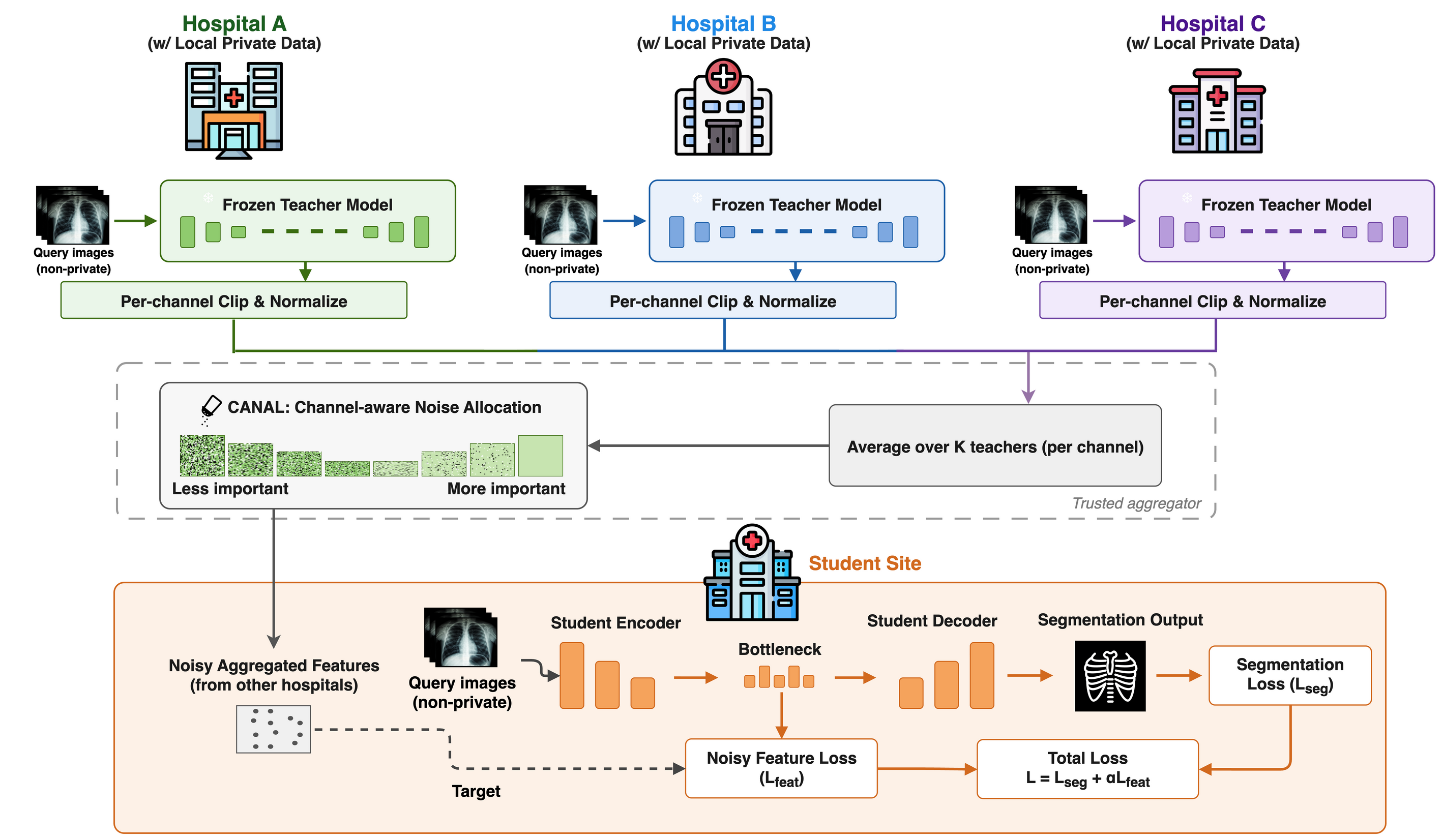}
  \caption{\textbf{CANAL multi-hospital private feature distillation.} Each hospital trains a teacher on its private scans. Shared non-private query images are passed once through each frozen teacher, and each teacher's bottleneck is per-channel clipped and normalized. The per-channel average across the $K$ teachers is then perturbed once by CANAL's channel-aware noise, with less important channels (by task-gradient energy) receiving more noise. The resulting noisy aggregate is the distillation target that the student, trained on the same query images, matches with a feature loss ($L_{\mathrm{feat}}$) alongside its segmentation loss ($L_{\mathrm{seg}}$). The dashed region marks the trusted aggregator, which forms the average and adds the noise before anything is released (\cref{ssec:impl}). Raw scans never leave a site, and each image's features are released once under an $(\varepsilon,\delta)$-DP guarantee.}
  \label{fig:teaser}
\end{figure*}

A segmentation model trained at a single hospital is limited by the size and diversity of that hospital's data, which reflects one set of scanners, acquisition protocols, and patient population. Training across many sites would address this, but the most direct route, pooling everyone's scans into one dataset, is blocked by patient-privacy regulation and institutional policy~\cite{kaissis2020secure}. The central question for trustworthy medical AI is therefore how one hospital can benefit from others' data without exposing any individual patient at the data-providing sites. We assume the recipient hospital holds its own non-private data on which the shared knowledge is distilled, so that privacy need only be preserved for the data providers.

Knowledge distillation~\cite{hinton2015distilling,romero2015fitnets} suggests a natural division of labor. A site trains a teacher model locally on its own scans. Rather than shipping raw images, it exports only learned signals, the teacher's logits or intermediate feature maps, and a model at another site is trained to match them. This is attractive because the heavy, identity-bearing object, the image, never leaves the hospital. The difficulty is that the exported signals are not as innocuous as they appear. A segmentation network's intermediate feature maps encode patient-specific anatomical and pathological characteristics, and a growing body of work shows that representations shared in this way are vulnerable to membership-inference attacks, which decide whether a particular patient was in the training set~\cite{shokri2017membership}, and to feature-inversion attacks, which reconstruct recognizable input structure from the released features~\cite{dosovitskiy2016inverting}. Sharing features is not, by itself, sharing privately.

Differential privacy (DP)~\cite{dwork2006calibrating} provides the remedy that membership and inversion attacks lack a defense against: calibrated Gaussian noise added to each released feature representation provably bounds how much any one patient can influence what is shared, and so bounds what any downstream adversary can infer. The strength of this guarantee is captured by a single privacy-loss parameter $\varepsilon$: smaller $\varepsilon$ means stronger privacy (more noise), with single-digit $\varepsilon$ the usual operating range.
The resulting recipe is simple: each site trains a teacher locally and privatizes its features and logits with noise, and any site learns a student from the privatized signals. This delivers the multi-site benefit while keeping a per-patient guarantee.
Our work is about making this recipe both honest and efficient for medical image segmentation, and \cref{fig:teaser} overviews the resulting pipeline.

\textbf{Composition-free release.} Differentially private feature distillation re-noises the teacher's released features at every student iteration~\cite{wu2025multiview}, so each patient's data is queried numerous times and the privacy cost composes over those queries, an accounting pressure that also arises when private distillation transfers teacher labels rather than features~\cite{li2022finegrained}. Holding a target user-level $\varepsilon$ then forces the budget to be split across all the releases, so the per-release noise grows with their number and overwhelms the signal at single-digit $\varepsilon$. We argue the correct model for this setting is sample-once-per-image: each patient image is pushed through the teacher once, noised once, and cached, so that every patient contributes exactly one release and the entire budget funds it---the user-level $\varepsilon$ equals the per-release $\varepsilon$. A single precomputation pass realizes this faithfully.

\textbf{Efficient allocation.} Given a fixed per-release budget, adding the same amount of noise to every channel is wasteful, because channels are not equally important to the task. In this paper, we measure channel importance by the task-gradient energy, and pose noise allocation as minimizing importance-weighted distortion subject to the privacy constraint. The solution, which we call CANAL (Channel-Aware Noise ALlocation), is a closed-form channel-importance water-filling (WF) rule, in the sense of the classic resource-allocation principle of spending a fixed budget where it helps most: more important channels receive proportionally less noise, and the allocation provably beats uniform noise whenever channel importance is non-uniform. Gradient-based importance has appeared in distillation as a way to re-weight a training loss~\cite{lan2024gkd}. Allocating a hard privacy budget across independently noised channels is a different problem (where the matched measure is the gradient energy, not the signed gradient), and we solve it optimally in closed form rather than heuristically. Because the clipping caps and importance scores are themselves computed from data, we also show how to charge each to the budget in a DP-honest split.

\textbf{Contributions.} (i) We identify the per-iteration composition cost in DP feature distillation and present a sample-once-per-image release that pays the privacy budget once, so the user-level $\varepsilon$ equals the per-release $\varepsilon$ (\cref{ssec:release,ssec:accounting}). (ii) We derive CANAL, a closed-form channel-importance water-filling allocation, and prove it strictly minimizes importance-weighted distortion at a fixed budget (\cref{thm:wf}). (iii) We observe that the clipping caps and importance scores driving the allocation are data-dependent, so releasing them in the clear, as prior work implicitly does, leaks. We give a DP-honest budget split that charges each to the budget, so the reported $\varepsilon$ holds (\cref{ssec:budget-split}). (iv) We validate the pipeline on three medical segmentation benchmarks across distinct modalities (dermoscopy, colonoscopy, ultrasound) (\cref{sec:experiments}).

\section{Related Work}
\label{sec:related}

\paragraph{Knowledge and feature distillation.} Knowledge distillation transfers a teacher's behavior to a student, originally by matching soft output logits~\cite{hinton2015distilling}. Feature distillation is the variant that matches the teacher's intermediate feature maps instead, so the student mimics its internal representations rather than only its outputs~\cite{romero2015fitnets}. Gradient-Guided Knowledge Distillation (GKD)~\cite{lan2024gkd} weights each feature channel by the spatially average-pooled task-loss gradient, emphasizing the channels that matter most for the downstream objective. We adapt this gradient-based notion of channel importance: because our mechanism adds independent per-channel noise, the matched quantity is the gradient energy rather than the pooled signed gradient, and we use it to allocate a privacy budget across channels (\cref{eq:gkd-importance}).

\paragraph{Privacy attacks on shared representations.} The need for noise on shared features is established empirically by attacks. In vision, feature-inversion networks reconstruct recognizable images directly from intermediate representations~\cite{dosovitskiy2016inverting}, and gradient-leakage attacks recover entire training images, pixel for pixel, from shared gradients~\cite{zhu2019deep,yin2021gradinversion}, precisely the kind of signal a distillation pipeline exports. Membership-inference attacks further decide whether a specific patient was in the training set~\cite{shokri2017membership}, and classical model inversion reconstructs input attributes from a trained model~\cite{fredrikson2015model}. In medical imaging, where the input is the patient, all of these translate directly into re-identification risk~\cite{kaissis2020secure}, motivating a formal guarantee rather than ad-hoc obfuscation.

\paragraph{Differentially private learning.} DP~\cite{dwork2006calibrating} bounds the influence of any single record on a released quantity. The standard way to train a model under DP is DP-SGD~\cite{abadi2016deep}, which clips and noises per-example gradients. PATE~\cite{papernot2017semi,papernot2018scalable} instead trains an ensemble of teachers on disjoint private partitions and transfers a noisy aggregate of their votes to a student, an architecture close in spirit to the multi-hospital setting we target. In medical imaging, federated learning~\cite{mcmahan2017communication} keeps raw scans on site but still leaks through shared updates unless combined with DP (now a standard recommendation~\cite{kaissis2020secure}), while input-space methods such as DP-Pix~\cite{fan2018image} privatize the pixels directly, trading substantial utility for the guarantee. These mechanisms noise different objects: gradients (DP-SGD), label votes (PATE), model updates (federated DP), or raw pixels (DP-Pix). We instead noise the released features once, before any student training, and spend the budget non-uniformly across channels to retain more task-relevant signal at the same $\varepsilon$.

\section{Private Feature Release and Channel-Importance Noise Allocation} \label{sec:privacy}

We formalize the mechanism as follows. \Cref{ssec:release} states the sample-once release model and explains why it differs from the implicit per-iteration model used by prior DP feature distillation work. \Cref{ssec:accounting} gives the composition accounting. \Cref{ssec:wf} derives the channel-importance water-filling allocation across the bottleneck channels. \Cref{ssec:budget-split} charges the data-dependent clipping caps and importance scores to the budget in a DP-honest split, \cref{ssec:granularity} explains why we allocate across channels rather than pixels, and \cref{ssec:impl} describes the single-pass implementation.

\subsection{Release model: sample-once-per-image}
\label{ssec:release}

We consider a federated medical setting in which a data-holding site processes each patient image once through the teacher encoder, adds Gaussian noise to the resulting bottleneck representation, and releases the noisy representation. A separate party trains the student against these fixed noisy targets. Formally, for training set $\mathcal{D}=\{x_i\}_{i=1}^{N}$ and teacher encoder $\phi_T : \mathcal{X}\!\to\!\mathbb{R}^{C\times H\times W}$, the released objects are
\begin{equation}
\begin{aligned}
\widetilde{z}_i &\;=\; \mathrm{denorm}\!\bigl(\,\mathrm{norm}(\phi_T(x_i)) + \eta_i\,\bigr),\\
\eta_i &\sim \mathcal{N}(0,\,\mathrm{diag}(\sigma^2)),
\end{aligned}
\label{eq:release}
\end{equation}
where $\mathrm{norm}(\cdot)$ performs per-channel $L_2$ clipping and normalization so that each channel of the released tensor lies in the unit ball before noise is added.

\textbf{Why this differs from per-iteration release.} Differentially private feature distillation re-noises the teacher's released features at every training iteration~\cite{wu2025multiview}, so each training image's features are released $Q = TB/N$ times across $T$ iterations of batch size $B$. Composition over those $Q$ releases multiplies the user-level $\varepsilon$, the same per-query accounting faced by private distillation that transfers teacher labels rather than features~\cite{li2022finegrained}.
The sample-once model in \cref{eq:release} eliminates this composition entirely: each user contributes exactly one release.

\subsection{Composition and budget accounting}
\label{ssec:accounting}

We use zero-Concentrated Differential Privacy (zCDP) \cite{bun2016concentrated} as the working unit: a variant of DP whose Gaussian-mechanism costs add up cleanly under composition, summarized by a single number $\rho$ that we convert to the usual $(\varepsilon,\delta)$ at the end. For a Gaussian mechanism on a $C$-dimensional vector with per-coordinate sensitivity $\Delta_c$ (the most that replacing one patient can move that coordinate) and per-coordinate noise scale $\sigma_c$, the per-release zCDP cost is
\begin{equation}
\rho_{\mathrm{rel}}
 \;=\; \sum_{c=1}^{C}\frac{\Delta_c^2}{2\,\sigma_c^2}.
\label{eq:rho-rel}
\end{equation}
After per-channel $L_2$ normalization, each channel has norm at most $1$, so the replace-one $L_2$ sensitivity is the data-independent constant $\Delta_c = 2/K$ for all $c$, where $K\geq 1$ is the PATE-style teacher ensemble size.

\paragraph{Conversion to $(\varepsilon,\delta)$-DP.} By \cite[Prop.~1.3]{bun2016concentrated}, a $\rho$-zCDP mechanism is $(\rho + 2\sqrt{\rho\log(1/\delta)},\,\delta)$-DP for any $\delta>0$. Conversely, the inverse map
\begin{equation}
\rho(\varepsilon,\delta)
 \;=\;\Bigl(\sqrt{\log(1/\delta)+\varepsilon}-\sqrt{\log(1/\delta)}\,\Bigr)^{2}
\label{eq:eps-to-rho}
\end{equation}
gives the maximum zCDP that fits inside a target $(\varepsilon,\delta)$-DP budget.

\paragraph{Composition.} The two release models differ only in the number of times each user's data participates in a release.
\begin{itemize}
\item \textbf{Sample-once per image.} Each user contributes a single
release. The user-level cost is $\rho_{\mathrm{user}} = \rho_{\mathrm{rel}}$, giving user-level $\varepsilon_{\mathrm{user}} = \varepsilon_{\mathrm{rel}}$.
\item \textbf{Per-iteration release.} Each user participates in
$Q = TB/N$ releases of fresh noise on fresh teacher queries. Naive zCDP composition gives $\rho_{\mathrm{user}} = Q\,\rho_{\mathrm{rel}}$, which is then mapped back to $(\varepsilon_{\mathrm{user}}, \delta)$ via \cite[Prop.~1.3]{bun2016concentrated}.
\end{itemize}

\paragraph{The composition penalty.} Under the per-iteration release model, each training image is released $Q=TB/N$ times over a training run, so the cost composes and can multiply the user-level $\varepsilon$ by orders of magnitude. The sample-once model in \cref{eq:release} pays the budget once, so the user-level $\varepsilon$ equals the per-release $\varepsilon$ by construction.

\subsection{Channel-importance water-filling allocation}
\label{ssec:wf}

Because \cref{eq:rho-rel} constrains only the sum of the per-channel costs, a fixed per-release budget $\rho_{\mathrm{rel}}$ leaves the individual noise scales $\{\sigma_c\}$ free: many allocations meet the same budget. We pick the allocation that minimizes the importance-weighted distortion
\begin{equation}
\min_{\sigma_1,\ldots,\sigma_C > 0}
\sum_{c=1}^C s_c\,\sigma_c^{2}
\quad\text{s.t.}\quad
\sum_{c=1}^C\frac{\Delta_c^{2}}{2\,\sigma_c^{2}} \le \rho_{\mathrm{rel}},
\label{eq:opt}
\end{equation}
where $s_c \ge 0$ is a per-channel importance score, defined next. This is a rate--distortion-style allocation, familiar from bit allocation in image compression: a fixed budget, here privacy rather than bits, is spent so as to minimize a weighted distortion, placing more noise where it costs the task least. Solving \cref{eq:opt} via Lagrangian gives the closed-form CANAL allocation.

\paragraph{Gradient-energy channel importance.} A channel matters to the segmentation task in proportion to how much perturbing it changes the task loss, and the loss gradient can measure this sensitivity directly. The mechanism of \cref{eq:release} adds independent Gaussian noise at every spatial location, so the quantity a privacy mechanism should consult is not the net directional gradient but its energy: for noise of variance $\sigma_c^2$ in channel $c$, the expected squared change in the task loss is $\sigma_c^2\sum_{h,w}\bigl(\partial\mathcal{L}_{\mathrm{task}}/\partial A_i[c,h,w]\bigr)^2$, so the per-channel weight under which the water-filling objective \cref{eq:opt} equals the true expected distortion is the squared-gradient energy. For a training image $x_i$, with $A_i = \phi_T(x_i) \in \mathbb{R}^{C\times H\times W}$ the teacher bottleneck activation and $\mathcal{L}_{\mathrm{task}}$ the supervised segmentation loss, we therefore set
\begin{equation}
g_{i,c} \;=\;
\frac{1}{HW}\sum_{h=1}^{H}\sum_{w=1}^{W}
\Bigl(\frac{\partial \mathcal{L}_{\mathrm{task}}}{\partial A_i[c,h,w]}\Bigr)^{2},
\label{eq:gkd-importance}
\end{equation}
the per-location average of the squared task gradient. The common factor $1/HW$ rescales every channel equally and so leaves the allocation unchanged. Averaging over the training set gives the importance score $s_c = \tfrac{1}{N}\sum_{i=1}^{N} g_{i,c} \ge 0$. Channels whose perturbation strongly changes the segmentation output receive a large $s_c$ and, by \cref{eq:wf}, are protected with proportionally less noise. The double aggregation in \cref{eq:gkd-importance}, spatial pooling followed by averaging over images, also keeps $s_c$ a low-dimensional, dataset-level statistic that is cheap to privatize (\cref{ssec:granularity}). The gradients are computed once on the teacher during the same precompute pass that produces the noisy release.

\begin{theorem}[Channel-WF optimal allocation]
\label{thm:wf}
For positive importance scores $\{s_c\}$, positive sensitivities $\{\Delta_c\}$ and budget $\rho_{\mathrm{rel}}>0$, the unique minimizer of \cref{eq:opt} is
\begin{equation}
\sigma^{\star}_c \;=\; \kappa \,\sqrt{\Delta_c}\, s_c^{-1/4},
\qquad
\kappa \;=\; \sqrt{\frac{1}{2\rho_{\mathrm{rel}}}
                          \sum_{c=1}^{C}\Delta_c\,\sqrt{s_c}}.
\label{eq:wf}
\end{equation}
The minimized objective scales as $\bigl(\sum_c \Delta_c\sqrt{s_c}\bigr)^2$ and is strictly smaller than that of the uniform allocation $\sigma^{\mathrm{unif}} = \sqrt{(\sum_c \Delta_c^2) / (2\rho_{\mathrm{rel}})}$ whenever the $s_c$ are non-constant.
\end{theorem}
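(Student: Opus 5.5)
Substituting $v_c = \sigma_c^{2} > 0$ turns \cref{eq:opt} into a convex program: minimize the linear objective $\sum_c s_c v_c$ subject to $\sum_c \Delta_c^{2}/(2v_c) \le \rho_{\mathrm{rel}}$. Each term $1/v_c$ is strictly convex on $(0,\infty)$, so the feasible set is convex. The plan is to get the minimizer from Cauchy--Schwarz directly, which proves optimality, the formula, and uniqueness at once. Lagrangian/KKT reasoning (valid here since Slater holds: large $v_c$ is strictly feasible) serves only as a cross-check.

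First, the budget constraint is active at any minimizer. If $\sum_c \Delta_c^2/(2v_c) < \rho_{\mathrm{rel}}$, we could shrink some $v_c$ and strictly lower the objective, because every $s_c>0$. Next, apply Cauchy--Schwarz to the vectors $(\sqrt{s_c v_c})_c$ and $(\Delta_c/\sqrt{v_c})_c$:
\begin{equation*}
\Bigl(\sum_c \Delta_c\sqrt{s_c}\Bigr)^{2} \;\le\; \Bigl(\sum_c s_c v_c\Bigr)\Bigl(\sum_c \frac{\Delta_c^{2}}{v_c}\Bigr) \;\le\; 2\rho_{\mathrm{rel}}\sum_c s_c v_c .
\end{equation*}
This gives the lower bound $\sum_c s_c\sigma_c^2 \ge \bigl(\sum_c \Delta_c\sqrt{s_c}\bigr)^2/(2\rho_{\mathrm{rel}})$ for every feasible allocation, and it already exhibits the claimed scaling. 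Equality needs two things: proportionality $\sqrt{s_c v_c} = \lambda\,\Delta_c/\sqrt{v_c}$, i.e.\ $v_c = \lambda\,\Delta_c/\sqrt{s_c}$, and an active budget. Plugging into the active constraint gives $\sum_c \Delta_c\sqrt{s_c}/(2\lambda) = \rho_{\mathrm{rel}}$, so $\lambda = \kappa^2$ with $\kappa$ as in \cref{eq:wf}, and hence $\sigma_c^\star = \kappa\sqrt{\Delta_c}\,s_c^{-1/4}$. The equality case of Cauchy--Schwarz pins down $\lambda$ uniquely (the vectors are nonzero), so the minimizer is unique. Strict convexity of the constraint after the substitution offers a second route to uniqueness if needed.

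For the comparison with the uniform allocation, I read $\sigma^{\mathrm{unif}}$ as the common scale that exactly meets the budget: $\sum_c \Delta_c^2/(2\sigma^2) = \rho_{\mathrm{rel}}$, so $\sigma^2 = \sum_c\Delta_c^2/(2\rho_{\mathrm{rel}})$. Its objective is $\bigl(\sum_c s_c\bigr)\bigl(\sum_c \Delta_c^2\bigr)/(2\rho_{\mathrm{rel}})$. The uniform allocation is feasible, so it cannot beat the optimum, and since the minimizer is unique the inequality is strict exactly when $\sigma^{\mathrm{unif}} \ne \sigma^\star$. That happens iff $\Delta_c s_c^{-1/2}$ is non-constant in $c$.

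The main obstacle is this last step. With general $\Delta_c$, "the $s_c$ are non-constant" is not the precise condition for strict improvement: for example, $s_c \propto \Delta_c^2$ with non-constant $\Delta_c$ makes uniform optimal. I would handle this by invoking the paper's setting from \cref{ssec:accounting}, where $\Delta_c = 2/K$ is identical across channels. Then $\sigma^\star$ is constant iff the $s_c$ are constant, and the strict inequality reduces to $\bigl(\sum_c\sqrt{s_c}\bigr)^2 < C\sum_c s_c$, which is Cauchy--Schwarz with equality only for constant $s_c$. I would state this equal-sensitivity assumption explicitly in the strictness claim, and note that in general the correct condition is non-constancy of $s_c/\Delta_c^2$.
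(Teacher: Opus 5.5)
Your proof is correct, and it takes a different route from the paper's.

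\textbf{The paper's route.} The paper substitutes $u_c = 1/\sigma_c^2$, which makes the objective $\sum_c s_c/u_c$ convex and the constraint linear. It asserts that the constraint is active and solves Lagrangian stationarity to get $u_c^\star \propto \sqrt{s_c}/\Delta_c$. It then fixes $\lambda$ from the active budget. Finally it compares $J^\star_{\mathrm{WF}} = (\sum_c\Delta_c\sqrt{s_c})^2/(2\rho_{\mathrm{rel}})$ with $J_{\mathrm{unif}} = (\sum_c\Delta_c^2)(\sum_c s_c)/(2\rho_{\mathrm{rel}})$ by Cauchy--Schwarz on $(\Delta_c)$ and $(\sqrt{s_c})$.

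\textbf{Your route.} You apply Cauchy--Schwarz to $(\sqrt{s_c v_c})$ and $(\Delta_c/\sqrt{v_c})$. This gives a lower bound valid at every feasible point, and its equality conditions (proportionality plus an active budget) identify the attaining allocation and its uniqueness at once. You therefore never need existence of a minimizer or KKT sufficiency, which the paper only asserts by appeal to convexity. The Lagrangian route, in exchange, shows more transparently where the $s_c^{-1/4}$ law comes from.

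\textbf{The strict-improvement clause.} Your uniqueness-based comparison is more careful than the paper's, and it exposes a real error. The paper claims that equality in $(\sum_c\Delta_c\sqrt{s_c})^2 \le (\sum_c\Delta_c^2)(\sum_c s_c)$ holds iff the $s_c$ are constant. In fact Cauchy--Schwarz equality holds iff $\sqrt{s_c}\propto\Delta_c$. Your counterexample $s_c\propto\Delta_c^2$ with non-constant $\Delta_c$ is valid: with $\Delta=(1,2)$ and $s=(1,4)$, both objectives equal $25/(2\rho_{\mathrm{rel}})$. So the strictness clause, as stated for general positive $\Delta_c$, is false. The repair you propose is the right one: either restrict to the paper's actual setting $\Delta_c = 2/K$, or replace the condition with non-constancy of $s_c/\Delta_c^2$.
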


\begin{proof}[Proof sketch]
Set $u_c = 1/\sigma_c^2$. The constraint becomes $\sum_c \Delta_c^2 u_c / 2 \le \rho_{\mathrm{rel}}$ and the objective $\sum_c s_c / u_c$. Stationarity of the Lagrangian gives $u_c \propto \sqrt{s_c}/\Delta_c$, which inverts to \cref{eq:wf}. Strictness follows from Cauchy--Schwarz. The full proof, including the active-constraint argument and the strict-improvement comparison, is given in \cref{app:proof}.
\end{proof}

Theorem~\ref{thm:wf} implements the intended behavior: high-importance channels (large $s_c$) receive smaller $\sigma_c$, and the closed form makes the per-release budget allocation deterministic given the importance scores.

\subsection{Joint budget split (caps, importance, release)}
\label{ssec:budget-split}

Two upstream quantities used in the pipeline are themselves data-dependent: the per-channel clipping caps $\widehat{c}_c$ and the importance scores $\widehat{s}_c$. Although both are read off the non-private query images the student supplies, they are read off the teacher's activations, and the teacher's weights are a function of its private training cohort. The caps and scores therefore depend on the protected patients, just indirectly through the model, and releasing them in the clear would leak. To be DP-honest, we charge a piece of the total budget to each. With total budget $\rho_{\mathrm{tot}}$, we split
\begin{equation}
\rho_{\mathrm{tot}}
 \;=\;
 \rho_{\mathrm{caps}} \;+\; \rho_{\mathrm{imp}} \;+\; \rho_{\mathrm{rel}},
\label{eq:split}
\end{equation}
release the caps and importance scores via Gaussian mechanisms with the corresponding zCDP costs, and feed the (noisy) outputs into the channel-WF allocation \cref{eq:wf}. The split fractions $(f_{\mathrm{caps}},f_{\mathrm{imp}},f_{\mathrm{rel}})$ with $f_{\mathrm{caps}}+f_{\mathrm{imp}}+f_{\mathrm{rel}}=1$ are hyperparameters. We fix them once across all datasets and budgets (\cref{ssec:exp-setup}) rather than tuning per setting.

\subsection{Channel vs.\ spatial granularity}
\label{ssec:granularity}

The water-filling derivation of Theorem~\ref{thm:wf} is agnostic to the index set: replacing $c$ by a joint index $(c,p)$ over channels and pixels leaves the closed form \cref{eq:wf} unchanged, so one might ask why CANAL allocates across channels rather than the finer $C\!\times\!H\!\times\!W$ grid. The obstruction is not the optimization but the accounting of the importance map that drives it: since the noise scales depend on data-derived scores, an honest mechanism must privatize those scores first (the role of $\rho_{\mathrm{imp}}$ in \cref{eq:split}), and the cost of doing so scales with the map's dimension and sensitivity. Channel scores $s_c$ are cheap on both counts: a $C$-dimensional, dataset-level aggregate (\cref{eq:gkd-importance}) of low replace-one sensitivity, privatized once and amortized over every release. They also carry no spatial index, so they cannot reveal where a patient's anatomy lies, only which channels matter on average. A spatial map enjoys neither property. Averaged over the dataset, it is nearly useless for unregistered images such as dermoscopy or ultrasound scans, where anatomy sits at a different location in each patient. Computed per image, it does pinpoint that location, but such a map is determined by a single image, giving it near-maximal sensitivity and making it impractical to privatize. Channel granularity is thus the honest sweet spot.

\subsection{Implementation}
\label{ssec:impl}

The sample-once release model is realized by a single precomputation pass in the federated, multi-teacher setting: $K$ sites each hold their own disjoint private patients and train a local frozen teacher. Before student training begins, the student site supplies a fixed set of non-private query images. Each image is passed once through every teacher, each teacher's bottleneck is per-channel clipped and normalized, and the per-channel average of the teachers' bottlenecks, perturbed by the noise of \cref{eq:release}, is stored to disk keyed by image identifier. Because each patient belongs to a single site and each teacher's clipped contribution carries weight $1/K$ in the aggregate, replacing one record perturbs at most one teacher and shifts the released mean by at most $\Delta_c = 2/K$ (\cref{ssec:accounting}). Adding calibrated Gaussian noise once to this low-sensitivity aggregate therefore suffices for the zCDP guarantee, whereas noising each teacher independently and averaging afterwards would leave $K$ times the noise variance in the released mean at the same budget. The average is formed by a trusted aggregator before any noise is added, the same trust model PATE~\cite{papernot2017semi,papernot2018scalable} assumes for its vote aggregator. What that party holds here is a clipped bottleneck tensor rather than a label vote, so the accounting is unchanged but the material it sees is richer, and replacing it with secure aggregation is a natural extension we do not pursue. The noise at a fixed budget accordingly shrinks as $K$ grows, and setting $K{=}1$ recovers the single-teacher case. Student training then loads these fixed targets at every iteration, eliminating both fresh teacher queries and fresh noise samples from the inner loop. We use $K{=}3$ teachers in our experiments (\cref{sec:experiments}). Pseudocode appears in \cref{app:pseudocode}.

\paragraph{Releasing a channel subset.} We concentrate the release on the most task-important channels rather than noising all $C$. In practice, all $K$ teachers produce full $C$-dimensional bottlenecks, which are per-channel clipped, normalized, and averaged into one shared $C$-dimensional mean. A single channel mask, derived from the shared importance scores averaged across all $K$ teachers, then zeros out the bottom $C{-}M$ channels of that mean. Gaussian noise is added only on the $M$ active channels. The release is therefore still $C$-dimensional but carries signal only in the top-$M$ channels, and CANAL allocates the budget across those $M$ channels as before. Because all teachers contribute to the same shared importance ranking, the active channel set is identical across teachers by construction. Because the ranking uses only the already-privatized importance scores (\cref{ssec:budget-split}), the selection incurs no additional privacy cost, and $M$ is simply a hyperparameter, with $M{=}C$ recovering the full-bottleneck release.

\section{Experiments}
\label{sec:experiments}

After describing the datasets, models, and privacy configuration (\cref{ssec:exp-setup}), we evaluate CANAL against uniform allocation across three medical segmentation benchmarks spanning distinct modalities and a range of privacy budgets (\cref{ssec:exp-generalize}), and illustrate qualitatively how the allocation reshapes the predicted masks at a tight budget (\cref{ssec:exp-qualitative}). We then measure how well the added noise mitigates concrete membership-inference and feature-inversion attacks (\cref{ssec:exp-attacks}), and finally ablate the ensemble size $K$ (\cref{ssec:exp-ablation}).

\subsection{Experimental setup}
\label{ssec:exp-setup}

\paragraph{Datasets.} We evaluate on three medical segmentation benchmarks spanning distinct imaging modalities: ISIC~2018 (RGB dermoscopy)~\cite{isic2018}, Kvasir-SEG (RGB colonoscopy)~\cite{jha2020kvasir}, and BUSI (single-channel grayscale ultrasound)~\cite{aldhabyani2020busi}. Each training image is one ``user'' for the purposes of user-level DP, consistent with the sample-once-per-image model of \cref{ssec:release}.

\paragraph{Models and distillation.} Teacher and student are U-Nets~\cite{ronneberger2015unet}. The teacher is trained locally on a site's images. Distillation matches the student's bottleneck features, through a $1\!\times\!1$ convolutional adapter, to the top $10\%$ of teacher channels by gradient-energy importance (clipped, normalized, noised) via an $L_2$ feature loss, combined with the supervised segmentation loss on the student. The same gradient-energy importance both selects these channels and sets CANAL's per-release budget allocation across them (\cref{ssec:impl}). The released targets are precomputed once and cached as described in \cref{ssec:impl}, so the student reads fixed targets with no teacher query or fresh noise in its inner loop.

\paragraph{Privacy configuration.} We work in zCDP \cite{bun2016concentrated} with $\delta\!=\!10^{-5}$ and report user-level $\varepsilon\!\in\!\{1,2,4,8\}$ via \cref{eq:eps-to-rho}. Sensitivities follow the per-channel $L_2$ normalization of \cref{ssec:accounting} ($\Delta_c\!=\!2/K$, $K\!=\!3$). Importance scores $s_c$ are the task-gradient energies of \cref{ssec:wf}. Unless stated otherwise we use the budget split $(f_{\mathrm{caps}},f_{\mathrm{imp}},f_{\mathrm{rel}})\!=\!(0.10,0.05,0.85)$ of \cref{ssec:budget-split}.

\subsection{Results across datasets and modalities}
\label{ssec:exp-generalize}

We evaluate across all three benchmarks and a range of privacy budgets. For each dataset we train a $K{=}3$ teacher ensemble and compare CANAL against uniform allocation (the same release with the budget spread equally over all channels) at $\varepsilon\in\{1,2,4,8\}$, reporting the Dice coefficient (DSC, in \%) on the held-out validation split over $5$ seeds, with all mechanisms tuned at the same budget. \Cref{tab:exp-generalize} reports the sweep. As we can see, CANAL matches or exceeds uniform on all three datasets and at every budget, with consistent if sometimes small margins, transferring across both anatomical domain (dermoscopy, colonoscopy, and ultrasound) and input modality (RGB and single-channel grayscale).

\begin{table}[t]
\centering
\small
\caption{\textbf{CANAL vs.\ uniform across datasets and $\varepsilon$} ($K{=}3$, Dice (\%), both approaches share the same seed for each case, better method per dataset in \textbf{bold}).}

\label{tab:exp-generalize}
\begin{tabular}{l l c c c c}
\toprule
Dataset & Method & $\varepsilon{=}1$ & $\varepsilon{=}2$ & $\varepsilon{=}4$ & $\varepsilon{=}8$ \\
\midrule
\multirow{2}{*}{Kvasir-SEG}  & uniform & 69.40 & 73.08 & 74.20 & 73.71 \\
                            & CANAL   & \textbf{70.81} & \textbf{73.56} & \textbf{75.22} & \textbf{74.98} \\
\addlinespace[2pt]
\multirow{2}{*}{BUSI} & uniform & 59.77 & 63.00 & 65.55 & 66.20 \\
                            & CANAL   & \textbf{61.42} & \textbf{64.27} & \textbf{66.59} & \textbf{67.01} \\
\addlinespace[2pt]
\multirow{2}{*}{ISIC 2018}       & uniform & 82.75 & 83.75 & 84.58 & 83.94 \\
                            & CANAL   & \textbf{83.14} & \textbf{84.17} & \textbf{84.79} & \textbf{84.52} \\
\bottomrule
\end{tabular}
\end{table}

\subsection{Qualitative results}
\label{ssec:exp-qualitative}

At a tight budget ($\varepsilon{=}2$), the choice of allocation is visible in the predicted masks. \Cref{fig:qualitative} shows one example case per modality. Under uniform noise, the student's predictions in these cases fragment into scattered, spurious regions and miss much of the lesion, bearing little resemblance to the ground truth. CANAL, which spends the same budget but spares the task-important channels, instead recovers a single coherent region localized on the true lesion or polyp and closely following its boundary. This holds across dermoscopy, colonoscopy, and ultrasound, indicating that concentrating the budget on the channels that carry the segmentation signal preserves the structure the student needs, where uniform noise destroys it.

\begin{figure*}[t]
\centering

\setlength{\tabcolsep}{2pt}
\renewcommand{\arraystretch}{1.0}

\newcommand{\qimg}[1]{%
    \includegraphics[width=0.12\linewidth]{#1}
}

\begin{tabular}{@{}cccc@{}}

\toprule

\textbf{Input} &
\textbf{Ground Truth} &
\textbf{Uniform} &
\textbf{CANAL (ours)} \\

\midrule


\qimg{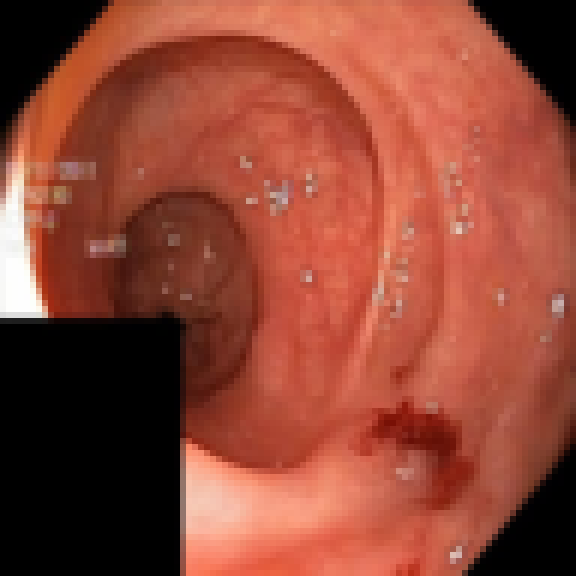} &
\qimg{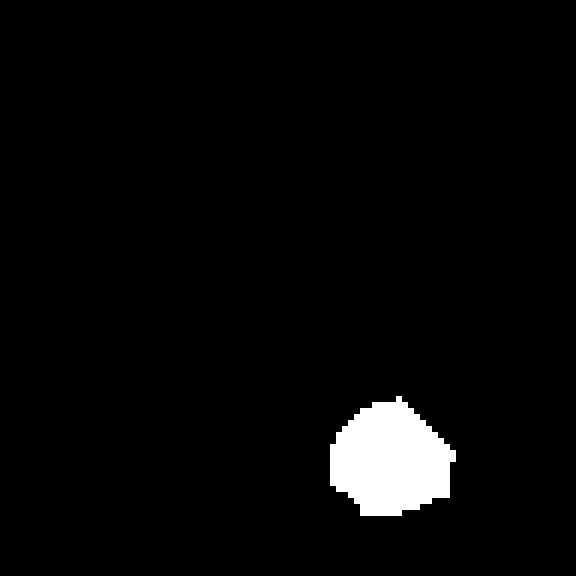} &
\qimg{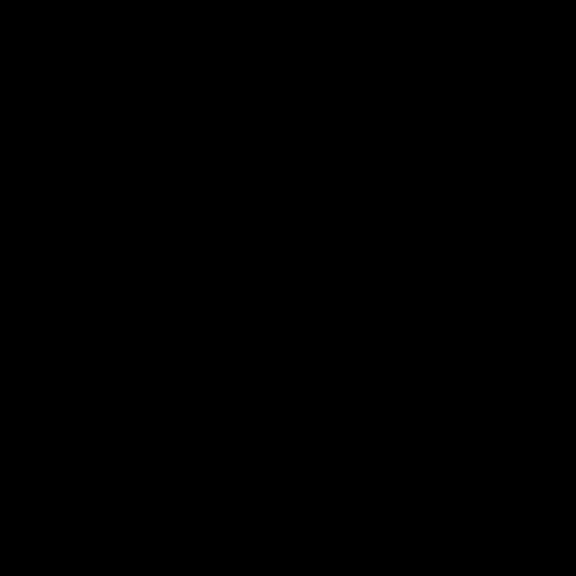} &
\qimg{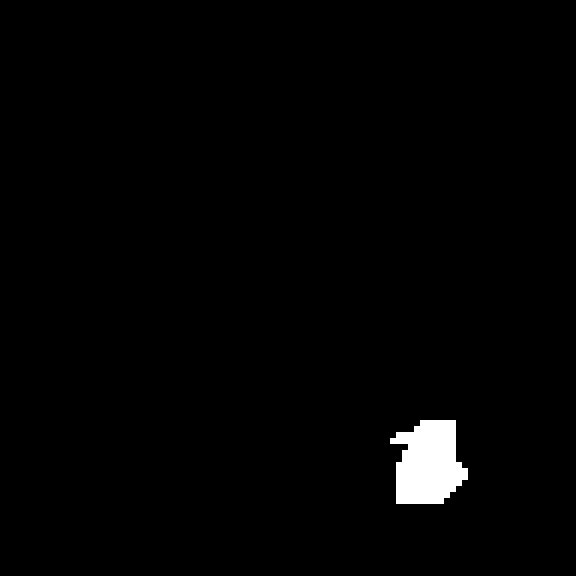} \\

\multicolumn{4}{c}{%
\footnotesize
Kvasir-SEG (Colonoscopy)%
} \\[5pt]


\qimg{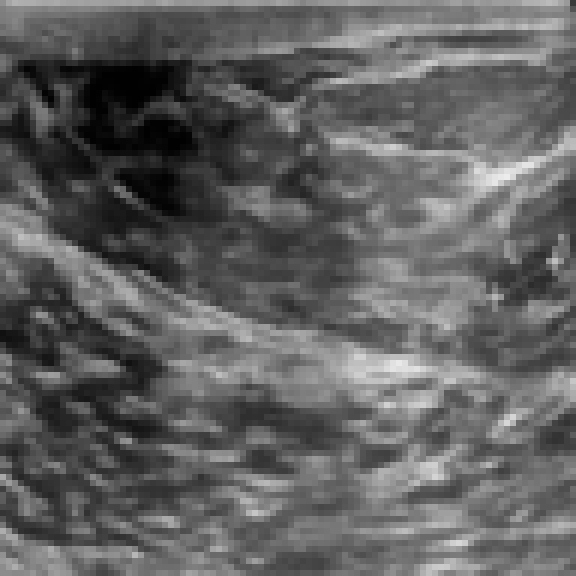} &
\qimg{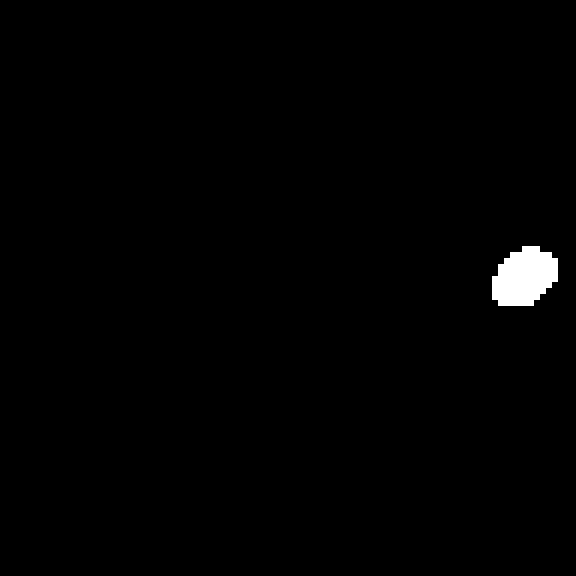} &
\qimg{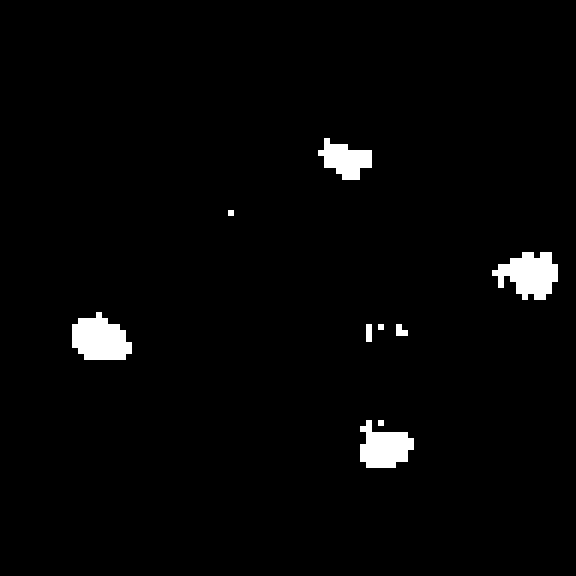} &
\qimg{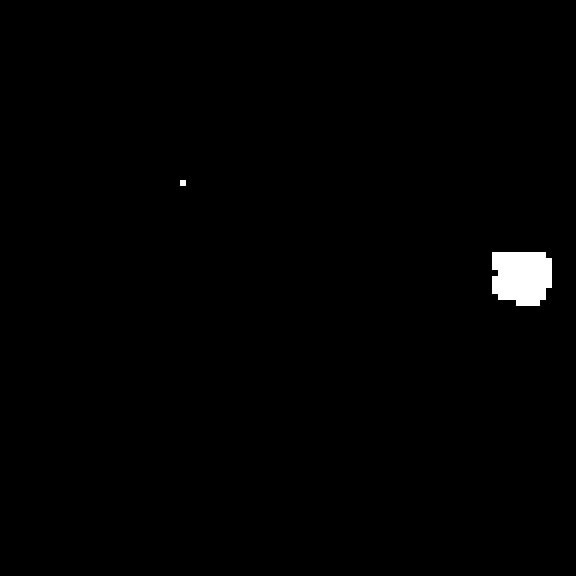} \\

\multicolumn{4}{c}{%
\footnotesize
BUSI (Ultrasound)%
} \\[5pt]


\qimg{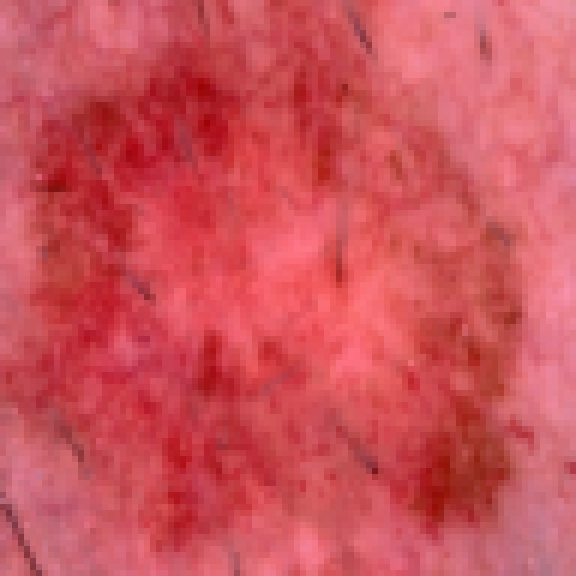} &
\qimg{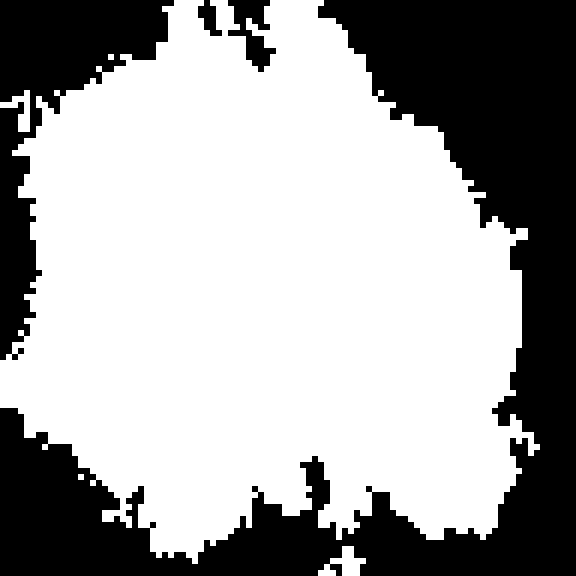} &
\qimg{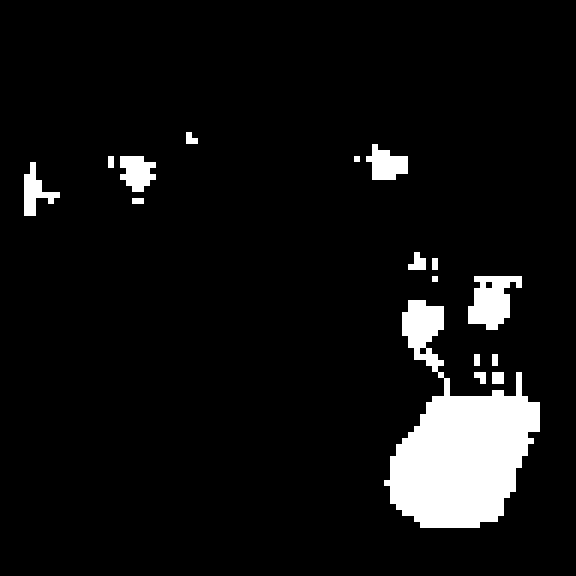} &
\qimg{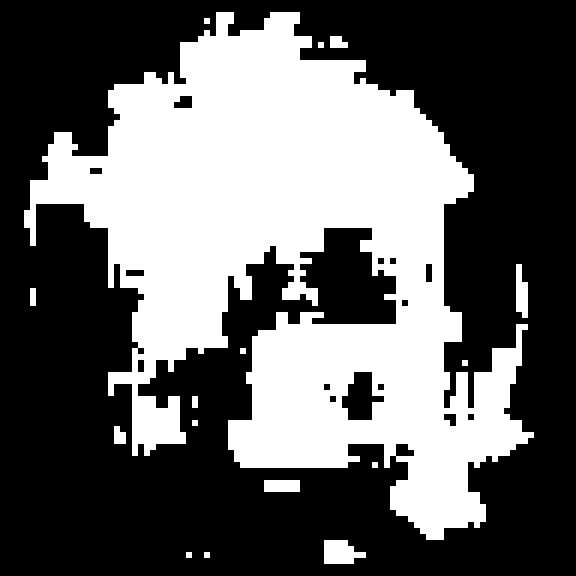} \\

\multicolumn{4}{c}{%
\footnotesize
ISIC 2018 (Dermoscopy)%
} \\

\bottomrule

\end{tabular}

\caption{\textbf{Qualitative segmentation under DP feature release.} Example validation cases (top to bottom: Kvasir-SEG, BUSI, ISIC~2018) at $\varepsilon{=}2$, $K{=}3$. Across all three modalities, CANAL (right column) recovers a substantially more accurate lesion/polyp boundary than uniform noise (third column) at the same privacy budget.}
\label{fig:qualitative}
\end{figure*}

\subsection{Privacy-attack evaluation}
\label{ssec:exp-attacks}

To connect the privacy budget to the concrete threats of \cref{sec:related}, we attack the released features directly. We report membership-inference AUC (closer to $0.5$ is more private) and feature-inversion reconstruction quality (SSIM between the inverted and original image) at $\varepsilon\in\{2,8\}$, for both uniform noise and CANAL. The questions are whether the calibrated noise neutralizes these attacks, and whether CANAL's non-uniform allocation costs any privacy relative to uniform noise at the same budget.

\begin{table}[t]
\centering
\small
\caption{\textbf{Privacy-attack resistance of the released features.}
Membership-inference AUC (closer to $0.5$ is more private) and
feature-inversion SSIM ($\downarrow$).
ISIC, $K{=}3$, top-$10\%$ channels, over $5$ seeds.}
\label{tab:exp-attacks}
\setlength{\tabcolsep}{4pt}
\begin{tabular}{l c c c c}
\toprule
 & \multicolumn{2}{c}{MI-AUC ($\to 0.5$)} & \multicolumn{2}{c}{Inversion SSIM $\downarrow$} \\
\cmidrule(lr){2-3}\cmidrule(lr){4-5}
Mechanism & $\varepsilon{=}2$ & $\varepsilon{=}8$ & $\varepsilon{=}2$ & $\varepsilon{=}8$ \\
\midrule
No noise ($\varepsilon{=}\infty$) & 1.000 & 1.000 & 0.651 & 0.651 \\
\addlinespace[2pt]
Uniform feature noise & 0.500 & 0.494 & 0.039 & 0.030 \\
\textbf{CANAL (ours)} & 0.499 & 0.495 & 0.037 & 0.029 \\
\bottomrule
\end{tabular}
\end{table}

\Cref{tab:exp-attacks} answers both. Without noise ($\varepsilon{=}\infty$) the released features are highly exposed: membership inference is near-perfect (AUC~$1.0$) and feature inversion reconstructs recognizable structure (SSIM~$0.65$). The calibrated noise collapses both attacks: across $\varepsilon\in\{2,8\}$ the membership-inference AUC falls to $0.50$, indistinguishable from the chance level, and the inversion SSIM drops below $0.04$, leaving no usable reconstruction. CANAL and uniform reach essentially the same attack resistance at a given budget, since both satisfy the same $(\varepsilon,\delta)$-DP guarantee. CANAL spends that budget differently across channels rather than spending more of it, so its utility advantage carries no privacy penalty.

\subsection{Ablations}
\label{ssec:exp-ablation}

We ablate the PATE-style ensemble size $K$ on ISIC~2018, using uniform allocation so the effect of $K$ on the per-release noise is isolated from the allocation.

The replace-one sensitivity is $\Delta_c{=}2/K$ (\cref{ssec:accounting}), so a larger ensemble lets a fixed budget buy less noise. Because our benchmarks are single datasets rather than several hospitals' data, the ablation simulates a $K$-site federation by partitioning one dataset into $K$ disjoint cohorts. This holds the total private data fixed and isolates the effect of $K$ on the noise, whereas in a real deployment each of the $K$ sites contributes its own data. \Cref{fig:abl-k} shows the result at $\varepsilon{=}8$: under this fixed-data split, Dice rises steeply with $K$ as the per-release noise shrinks, then flattens beyond $K{=}3$ once the noise floor is low and the smaller per-teacher cohorts offset the gain. We choose $K{=}3$ by default because $K$ is not a free hyperparameter: in the multi-hospital setting it is the number of data-providing sites, set by how many institutions can be recruited and coordinated under data-use agreements rather than tuned. Cross-institutional medical collaborations are typically small~\cite{kaissis2020secure}, so $K{=}3$ is a representative federation size. It is also the regime where channel-aware allocation is most useful: a small ensemble keeps the per-release noise high ($\Delta_c{=}2/K$), and the value of shaping that noise across channels grows with its magnitude. We accordingly report the main results at $K{=}3$.

\begin{figure}[t]
\centering
\includegraphics[width=0.8\linewidth]{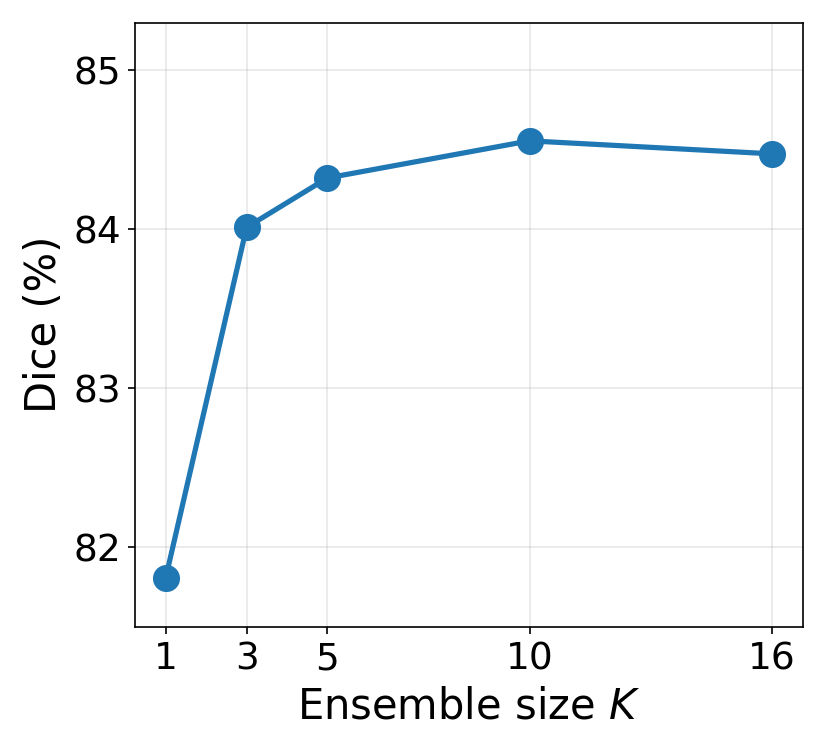}
\vspace{-0.1in}
\caption{\textbf{Ensemble size $K$ vs.\ Dice at $\varepsilon{=}8$} (uniform allocation, mean over $3$ seeds). To isolate the effect of $K$ at a fixed data budget, the ablation partitions one dataset (ISIC~2018) into $K$ disjoint cohorts. In deployment each of the $K$ sites instead contributes its own data. Dice rises steeply through $K{=}3$ and then flattens, so the default $K{=}3$ already captures most of the attainable gain.}
\label{fig:abl-k}
\end{figure}

\section{Conclusion}
\label{sec:conclusion}

We studied how hospitals can share what their models have learned without exposing individual patients, and found that the standard recipe of noising distilled features is both less private and less useful than it needs to be. On the privacy side, re-sampling noise every student iteration releases each patient numerous times, so the privacy cost composes and grows with the number of epochs. A sample-once-per-image release, realized by a single precomputation pass, spends the budget once, so the user-level $\varepsilon$ equals the per-release $\varepsilon$. On the utility side, spreading noise uniformly across feature channels wastes the budget. CANAL, a closed-form channel-importance water-filling allocation, gives task-important channels less noise and provably minimizes importance-weighted distortion at a fixed budget. Because the clipping caps and importance scores are themselves data-dependent, we charge them to the privacy budget through a DP-honest split. Empirically, across dermoscopy, colonoscopy, and ultrasound, the single-pass release neutralizes membership-inference and feature-inversion attacks, while CANAL recovers more task-relevant signal than uniform noise at the same $\varepsilon$.

\paragraph{Limitations and future work.} Several of the margins in \cref{tab:exp-generalize} are small, so the table is best read as directional evidence that channel-aware allocation helps at a fixed budget, under conditions identical to those of the uniform baseline. Our guarantee is user-level under the sample-once model, where each patient contributes one image. Repeated visits or correlated scans per patient would require group-level accounting. We validate on three 2D modalities (dermoscopy, colonoscopy, and ultrasound). Extending the study to 3D volumes and to a larger federation of hospitals with heterogeneous distributions is natural future work. CANAL allocates across channels because channel importance is a cheap, low-sensitivity dataset-level statistic. Extending the allocation to space is honest only for spatially registered data such as atlas-aligned brain MRI, where the importance map becomes a dataset-level prior (\cref{ssec:granularity}).

{\small
\bibliographystyle{ieeenat_fullname}
\bibliography{egbib}
}

\appendix

\section{Pseudocode: sample-once distillation}
\label{app:pseudocode}

Algorithm~\ref{alg:precompute} gives the precompute routine that implements the sample-once-per-image release model of \cref{ssec:release}. It is invoked once before the student training loop. Throughout, $\phi_T$ denotes the (frozen) teacher encoder, $\mathrm{clip}(\cdot;\widehat{c})$ per-channel $L_2$ clipping to caps $\widehat{c}$, and $\eta\sim\mathcal{N}(0,\sigma^2)$ draws independent Gaussian noise per channel and spatial location.

\begin{algorithm}[h]
\caption{Sample-once precompute of noisy teacher bottlenecks, written for a single teacher ($K{=}1$) to keep the notation light. For $K>1$, lines~3--4 are carried out for each of the $K$ teachers and $z_i'$ becomes their per-channel average, after which the noise of line~5 is added once to that average (\cref{ssec:impl}).}
\label{alg:precompute}
\begin{algorithmic}[1]
\Require teacher $\phi_T$; training images $\{x_i\}_{i=1}^{N}$;
         per-channel caps $\widehat{c}\in\mathbb{R}^{C}$;
         per-channel noise scales $\sigma\in\mathbb{R}^{C}$
\Ensure  noisy bottleneck cache
         $\mathcal{C}:\mathrm{img\_id}\!\to\!\mathbb{R}^{C\times H\times W}$
\State $\mathcal{C} \gets \emptyset$
\For{$i = 1,\dots,N$}
  \State $z_i \gets \phi_T(x_i)$
         \Comment{forward through encoder, frozen}
  \State $z_i' \gets \mathrm{clip}(z_i;\widehat{c}) / \widehat{c}$
         \Comment{per-channel norm $\le 1$}
  \State sample $\eta_i \in \mathbb{R}^{C\times H\times W}$ with
         $\eta_i^{(c)} \!\sim\! \mathcal{N}(0,\sigma_c^{2})$
  \State $\widetilde{z}_i \gets (z_i' + \eta_i)\odot\widehat{c}$
         \Comment{denormalize}
  \State $\mathcal{C}[\mathrm{img\_id}(x_i)] \gets \widetilde{z}_i$
\EndFor
\State \Return $\mathcal{C}$
\end{algorithmic}
\end{algorithm}

\noindent During student training, the inner loop's teacher path is replaced by a single look-up $\widetilde{z}_i \leftarrow \mathcal{C}[\mathrm{img\_id}(x_i)]$, eliminating both per-iteration teacher queries and per-iteration noise samples.

\section{Full proof of the channel water-filling theorem}
\label{app:proof}

We restate the importance-weighted optimization problem of \cref{ssec:wf} for convenience:
\begin{equation*}
\min_{\sigma_1,\ldots,\sigma_C>0}\; \sum_{c=1}^{C} s_c\,\sigma_c^{2}
\quad\text{s.t.}\quad
\sum_{c=1}^{C} \frac{\Delta_c^{2}}{2\,\sigma_c^{2}} \le \rho_{\mathrm{rel}}.
\end{equation*}

\paragraph{Change of variables.} Let $u_c = 1/\sigma_c^{2}$, so $u_c > 0$ and $\sigma_c^{2} = 1/u_c$. The problem becomes
\begin{equation}
\min_{u_1,\ldots,u_C>0}\; \sum_{c=1}^{C} \frac{s_c}{u_c}
\quad\text{s.t.}\quad
\sum_{c=1}^{C} \frac{\Delta_c^{2}}{2}\,u_c \le \rho_{\mathrm{rel}}.
\label{eq:dual}
\end{equation}
Both the objective $\sum_c s_c/u_c$ (convex in $u_c$ on the positive orthant) and the linear constraint give a convex program with a unique minimizer. The constraint is active at the optimum because the objective is strictly decreasing in each $u_c$.

\paragraph{Lagrangian.} Form the Lagrangian with multiplier $\lambda > 0$:
\begin{equation*}
\mathcal{L}(u,\lambda)
 \;=\; \sum_{c=1}^{C} \frac{s_c}{u_c}
       + \lambda\left(\sum_{c=1}^{C}\frac{\Delta_c^{2}}{2}u_c - \rho_{\mathrm{rel}}\right).
\end{equation*}
Stationarity $\partial\mathcal{L}/\partial u_c = 0$ gives
\begin{equation*}
-\frac{s_c}{u_c^{2}} + \frac{\lambda\,\Delta_c^{2}}{2} = 0
\;\;\Longrightarrow\;\;
u_c^{\star} \;=\; \sqrt{\frac{2\,s_c}{\lambda\,\Delta_c^{2}}}
\;=\; \sqrt{\frac{2}{\lambda}}\;\frac{\sqrt{s_c}}{\Delta_c}.
\end{equation*}

\paragraph{Solving for $\lambda$ via the active constraint.} Substitute back into $\sum_c \tfrac{\Delta_c^{2}}{2}u_c^{\star} = \rho_{\mathrm{rel}}$:
\begin{equation*}
\sum_{c=1}^{C}\frac{\Delta_c^{2}}{2}\sqrt{\frac{2}{\lambda}}\,\frac{\sqrt{s_c}}{\Delta_c}
 \;=\; \rho_{\mathrm{rel}}
\;\;\Longrightarrow\;\;
\sqrt{\frac{1}{2\lambda}}\,\sum_{c=1}^{C}\Delta_c\sqrt{s_c}
 \;=\; \rho_{\mathrm{rel}}.
\end{equation*}
Solving for $\lambda$:
\begin{equation*}
\lambda \;=\; \frac{1}{2\,\rho_{\mathrm{rel}}^{2}}
              \Bigl(\sum_{c=1}^{C}\Delta_c\sqrt{s_c}\Bigr)^{2}.
\end{equation*}

\paragraph{Optimal $\sigma_c^{\star}$.} Substituting $\lambda$ back into $u_c^{\star}$ and using $\sigma_c^{\star} = 1/\sqrt{u_c^{\star}}$:
\begin{equation*}
\sigma_c^{\star}
 \;=\;
 \frac{1}{(2/\lambda)^{1/4}}\,
 \frac{\Delta_c^{1/2}}{s_c^{1/4}}
 \;=\;
 \underbrace{\sqrt{\frac{1}{2\,\rho_{\mathrm{rel}}}\sum_{c=1}^{C}\Delta_c\sqrt{s_c}}}_{\displaystyle\kappa}
 \;\frac{\sqrt{\Delta_c}}{s_c^{1/4}},
\end{equation*}
which is exactly the channel water-filling allocation of \cref{thm:wf}.

\paragraph{Optimal objective and strict improvement.} The optimal value of the original objective is
\begin{equation}
\begin{aligned}
J_{\mathrm{WF}}^{\star}
 &\;=\; \sum_{c=1}^{C} s_c\,(\sigma_c^{\star})^{2}
  \;=\; \kappa^{2}\sum_{c=1}^{C}\Delta_c\sqrt{s_c}\\
 &\;=\; \frac{1}{2\,\rho_{\mathrm{rel}}}
       \Bigl(\sum_{c=1}^{C}\Delta_c\sqrt{s_c}\Bigr)^{2}.
\end{aligned}
\label{eq:wf-obj}
\end{equation}
For uniform $\sigma_{\mathrm{unif}}$ saturating the same budget, $\sigma_{\mathrm{unif}}^{2} = (\sum_c \Delta_c^{2})/(2\,\rho_{\mathrm{rel}})$, so the uniform-allocation objective is
\begin{equation}
J_{\mathrm{unif}}
 \;=\; \sigma_{\mathrm{unif}}^{2}\sum_{c}s_c
 \;=\; \frac{1}{2\,\rho_{\mathrm{rel}}}
       \Bigl(\sum_{c}\Delta_c^{2}\Bigr)\Bigl(\sum_{c}s_c\Bigr).
\label{eq:unif-obj}
\end{equation}
Comparing \cref{eq:wf-obj} and \cref{eq:unif-obj} via Cauchy--Schwarz applied to the vectors $(\Delta_c)$ and $(\sqrt{s_c})$, or directly via $(\sum_c \Delta_c\sqrt{s_c})^{2} \le (\sum_c \Delta_c^{2})(\sum_c s_c)$ with equality iff $s_c$ is constant, we obtain
\begin{equation*}
J_{\mathrm{WF}}^{\star} \;\le\; J_{\mathrm{unif}},
\end{equation*}
with strict inequality whenever the $\{s_c\}$ are not all equal. This establishes the strict-improvement claim of \cref{thm:wf} and completes the proof. \qed

\end{document}